\newcommand{\argmin}{\mathop{\rm argmin}}
\newtheorem{theorem}{Theorem}
\newtheorem{lemma}[theorem]{Lemma}
\newcolumntype{K}[1]{>{\centering\arraybackslash}m{#1}}
\title{Shape-Based Approach to Household Load Curve Clustering and Prediction}
\author{\IEEEauthorblockN{Thanchanok Teeraratkul\IEEEauthorrefmark{1},
Daniel O'Neill\IEEEauthorrefmark{2} and Sanjay Lall\IEEEauthorrefmark{3} }
\IEEEauthorblockA{Electrical Engineering \\
Stanford University\\
Stanford, USA\\
e-mail: \IEEEauthorrefmark{1}tteerara@stanford.edu,
\IEEEauthorrefmark{2}dconeill@stanford.edu,
\IEEEauthorrefmark{3}lall@stanford.edu}}
\begin{document}
\maketitle
\begin{abstract}

Consumer Demand Response (DR) is an important research and industry problem, which seeks to categorize, predict and modify consumer\textquotesingle  s energy consumption. Unfortunately, traditional clustering methods have resulted in many hundreds of clusters, with a given consumer often associated with several clusters, making it difficult to classify consumers into stable representative groups and to predict individual energy consumption patterns. In this paper, we present a shape-based approach that better classifies and predicts consumer energy consumption behavior at the household level.  The method is based on Dynamic Time Warping. DTW seeks an optimal alignment between energy consumption patterns reflecting the effect of hidden patterns of regular consumer behavior.
Using real consumer 24-hour load curves from Opower Corporation, our method results in a 50\% reduction in the number of representative groups and an improvement in prediction accuracy measured under DTW distance. We extend the approach to estimate which electrical devices will be used and in which hours.
\end{abstract}
\begin{IEEEkeywords}
Smart Meter Data, Clustering, Demand Response, Prediction.
\end{IEEEkeywords}

\section{Introduction}
Demand Response (DR) in electrical distribution systems is an important tool for improving a utility\textquotesingle s economic and energy efficiency, reducing emissions, and integrating Renewables. Households represent at least 20\% of electrical energy consumption, making DR an active area of research. In Household DR, individual households are asked to change their energy consumption pattern, often with only a few hours advanced warning. A central authority signals a selected subset of individual households to adjust their power consumption and often incentivizes this behavior through economic means.

The central economic problem in Household DR is selecting the "best" households to incentivize. This entails at least three sub-problems: First, classifying households into groups with similar energy consumption behaviors, second, predicting what a given consumers energy consumption will be in the next day, and third, estimating what devices that consumer uses each day.  The first problem seeks to categorize or segment consumers, and the second and third problems seek to predict the potential benefit of incentivizing a particular customer. 
Unfortunately, a typical consumer exhibits a wide variation in their daily 24 hour energy consumption patterns, both across different days of the week and, most importantly, for a given day of the week. 

Load curve is 24 hour record of the consumer household\textquotesingle s energy consumption that reflects the pattern of household electrical devices usage. Although the daily pattern could be fairly routine, each device usage does not necessarily occur at the exact time every day, resulting in a set of load curves with similarity in shape, but with variation in time. Household load curves classification could be achieved through clustering. One of the challenges is to address similarity measures that are used to make clusters. Similar load curves are grouped together based on calculating similarity among load curves using similarity measures. Since the time of occurrence is not as important in grouping load curves with similar shape, the \textbf{shape-based} approach is appropriate. In the shape based approach, the elastic method such as Dynamic time Warping (DTW) \cite{Bemdt94usingdynamic} is used as a similarity measure \cite{clustering_review}. As a result, the cluster is a grouping of similar shape where load curves in each group do not necessarily have similar value at each time point (Figure \ref{fig:grouping}). 

In this paper we address household load curve clustering and prediction using the shape-based approach.The contributions are as follows.

\begin{enumerate}
\item Using DTW as a dissimilarity measure for clustering results in a 50\% reduction in the number of 24 hour energy clusters, compared to traditional K-means and gaussian based E\&M algorithm.
\item Based on DTW clusters, we develop a Markov model based method for estimating an individual households energy consumption which does not require the knowledge driver variables such as weather data.
\item  We present a new and effective method of estimating when and what devices in a household is used. We show that the probabilistic bound for this fine grain prediction is tight when the power vector is appropriately chosen.
\end{enumerate}

\section{Literature review}
Electricity consumption classification and aggregate load forecasting has been extensively studied. Most forecasting techniques model a relationship between aggregated load demand and driver variables such as calendar effect, weather effect and lagged load demand (e.g. demand at previous hours or at the same hours of previous days). In this paper, however, we focus exclusively on individual household classification and load forecasting. What follows is a summary of prior work on cluster based classification and load forecasting.

Clustering is a technique where similar data are placed into the same group without prior knowledge of groups definition \cite{clustering_review}. There is different similarity measures used to specify the similarity between time series. Among the most popular are Euclidean distance, Hausdorff distance, HMM-based distance, Dynamic time warping (DTW) and Longest Common Sub-Sequence (LCSS) \cite{clustering_review}. One significant application of time series clustering is electricity consumption pattern discovery, where several techniques based on Euclidean distance has been developed.
\cite{mining} uses unsupervised learning technique such as K-means and SOM to classify load curves. In \cite{ram_euclidean}, $L_2$ clustering metric was used, and the resulting number of classes is an order of thousands. In \cite{compare}, results from using various clustering algorithms such as hierarchical clustering, K-means and fuzzy K-means are compared. In \cite{hidden_markov} the authors propose a dynamic model for load curves clustering, fitting individual load curve using semi hidden markov model (SHMM), and classifying those curves by spectral clustering. In \cite{online_media}, the authors present K-SC algorithm, an extension of K-means, with the metric that find the optimal alignment including translation and scaling for matching the shape of the two time series.

Dynamic Time Warping (DTW) distance \cite{Bemdt94usingdynamic} finds the optimal alignment between two time series by stretching or compressing the segments of the series. The distance has been demonstrated in various applications \cite{dtw_word},\cite{dtw_signature},\cite{dtw_genes}. For time series clustering and classification, the authors of \cite{dtw_niennat} examine clustering algorithms using DTW, suggesting the problem with  averaging time series based on this measure. In \cite{dtw_avg}, the author proposes a technique for averaging time series based on DTW distance, which produces clusters with good quality when used in K-means. In \cite{dtw_weight}, the author introduces weighted DTW distance, which penalizes points with higher phase difference between reference point and a testing point to prevent minimum distance distortion. The author in \cite{dtw_deriv} proposes a Derivative Dynamic Time Warping (DDTW), where the distance is measured between higher level feature of the time series obtained by the first derivative. In \cite{dtw_fuzzy}, alternatives fuzzy clustering algorithms using DTW distance are proposed. 

Load forecasting traditionally refers to forecasting the expected electricity demand at aggregated levels, such as small area or utility level \cite{forecast_tutorial} \cite{book_forecast}. In a Global Energy Forecasting Competition 2012 (GEFCom2012), several techniques have been developed for a load forecast at a zonal and system level (sum of electricity consumption from 20 zones) \cite{gefcom}. Among the techniques used by the selected entries are multiple linear regression (MLR), generalized additive model, gradient boosting machines, random forest, neural network etc. Most of the techniques requires a knowledge of driver variable such as temperature and calendar effect to construct a predictor. In \cite{Wang2016}, the authors use the multiple linear regression model with added recency effect. Through variable selection, they find the optimal number of lagged hourly temperature and moving average temperature needed in a regression model. To accommodate some non-linear relationships between load and driver\textquotesingle s variables, generalized additive model has been introduced. In \cite{semi_2010}, the authors develop a semi-parametric model for half hourly demand, and use natural logarithm to transform raw demand. In \cite{semi_2014}, the authors develop the model including long term trend estimate, medium term estimate which describes electricity demand response for temperature changes, and short term estimate which reflects local behavior. In \cite{gradient_boosting}, the authors uses gradient boosting to forecast the load for GEFCom2012. Among several AI techniques, Artificial neural network is of interest for load forecast, since it does not require explicit function of a relationship between input and output \cite{forecast_tutorial}. In \cite{hippert_ann}, the authors compare several ANN based load forecasting techniques.They emphasize the importance of appropriate architecture in neural network design and avoidance of overfitting. Support vector machine (SVM) generally works well data for classification, and can also be used for regression. The authors in \cite{svm} use support vector regression (SVR) with calendar variables, temperature and past load as an input to predict maximum daily values load for UNITE competition 2001. \cite{Hong_2009} applies immune algorithm as a parameter selection for SVR model. However, due to the lack of high frequency temperature data, they also need temperature forecast, which expands the margin for load prediction error. Although most of the existing literature focus on load forecasting at the utility, substation or larger level, the techniques and methodology could be applied to smaller level forecast. Furthermore, the availability of smart meter data from the past decade has enabled hierarchical load forecasting (HLF), which covers forecasting at various levels, from household to corporate level, from few minutes ahead to years ahead. 


\section{Household Load Curves}
A household load curve is a 24 hour record of the consumer household\textquotesingle s energy consumption. In this paper consumption is sampled every hour using the AMI infrastructure. The load curves reflect the pattern of consumer\textquotesingle s daily lives in the devices they use and the energy they consume: Hair dryer, microwave, range, etc.  However, even though the daily patterns are fairly routine, the exact time or sequence in which devices are used can be expected to vary greatly with the individual\textquotesingle s immediate circumstances (e.g. she blowdries her hair at 6:45 rather than after breakfast at 7:30). This is reflected in the variation of load curves (Figure \ref{fig:same class}), even for a very consistent set of behavioral patterns.This problem makes clustering the data and consumer classification difficult. Clustering can generate many different clusters even for a single individual, greatly complicating individual load prediction. This is especially true for similarity measures such as $L_1$, $L_2$, or other inelastic metrics.  

\begin{figure}[h]
\begin{center}
  \includegraphics[width=.9\linewidth]{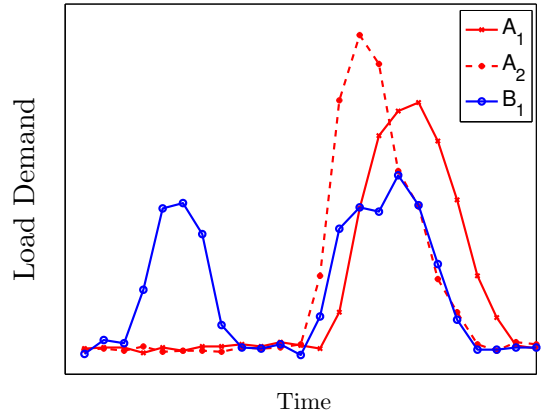}
  \end{center}
  \caption{Three load curves from 2 households. $A_1$ and $A_2$ belongs to household A, and $B_1$ belongs to household B. Clearly, $A_1$ and $A_2$ has a similar shape and should be grouped together. However, using euclidean distance in clustering would not result in $A_1$ and $A_2$ belong to the same cluster since $\|A_1-A_2 \|_2 > \|A_1-A_3 \|_2$ and $\|A_1-A_2 \|_2 > \|A_1-A_2 \|_2$. The shape based similarity measures such as DTW give a lower dissimilarity between $A_1$ and $A_2$ and therefore result in $A_1$ and $A_2$ belong to the same cluster.  }\label{fig:grouping}
  \end{figure}

\begin{figure}[h]
\begin{center}
  \includegraphics[width=.9\linewidth]{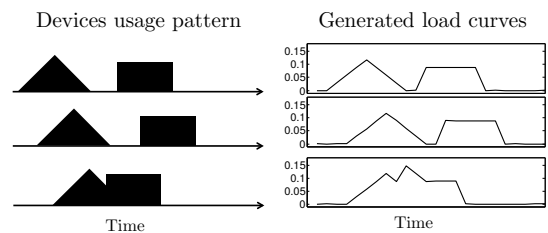}
  \end{center}
  \caption{Three load curves generated from the same pattern of behavior but with variation in time. The triangle represents electric oven usage and the rectangle represents air condition usage. If the top load curve is generated from a routine pattern, the middle load curve is a result from using an oven and air condition 1 hour later than the typical routine. The bottom load curve is a result from  using an oven 1 hour later, and using an air condition 1 hour before the routine.  }\label{fig:same class}
  \end{figure}

\section{Clustering with Dynamic Time Warping (DTW)}
\label{clustering}
Since Household DR is focused on understanding patterns of human electric device usage in order to reduce energy consumption, the shape-based approach clustering is most appropriate for load curves. Time of occurrence of patterns is not as important to find a similarity in shape. Therefore, elastic methods such as Dynamic Time Warping (DTW) \cite{Bemdt94usingdynamic} is a suitable similarity measure. Using DTW, the shapes of two load curves are matched as well as possible by non-linear stretching and contracting of the time axes \cite{clustering_review}. As a result, the clusters can be thought of as a grouping of common electric device usage pattern, where load curves in each group do not necessarily have similar value at each time point.


In this work, we assume that a typical consumer exhibits up to about 1 hour variation in time of electrical devices usage, which results in stretching and contracting of the time axes within each 2 hours long period. The \textbf{warping path} \cite{dtw}  of DTW is chosen so that each point in a load curve is compared to at most two previous or two latter points in time from another load curve  (figure \ref{fig:dtwpattern}). Using DTW with this warping path as a similarity metric, the clusters of load curves with similar shape are constructed regardless of stretching and contracting of the time axes.

 To calculate DTW distance between two load curves, $\mbox{DTW}(X,Y)$, we use an algorithm based on dynamic programming to determine the optimal comparison path. The algorithm recursively compute the costs of the alignment between subsequences of $X$ and $Y$ and store them in an accumulate cost matrix $C$. Let $x_n$ and $y_m$ be an hourly energy consumption of different load curves. We use a distance measure $d(x_{n}, y_{m}) = (x_{n} - y_{m})^2$. The cost of the optimal alignment can be recursively computed by  \begin{equation}
C(X_n,Y_m)  = \mbox{min}
\begin{cases}
C(X_{n-1}, Y_{m-1}) &+ \quad d(x_n, y_m) \\
C(X_{n-2}, Y_{m-1})  &+ \quad d(x_{n-1}, y_m)  \\
&+ \quad d(x_n, y_m) \\
C(X_{n-1}, Y_{m-2}) &+ \quad d(x_{n}, y_{m-1})  \\
&+ \quad d(x_n, y_m) \\
C(X_{n-3}, Y_{m-1}) &+ \quad d(x_{n-2}, y_m)  \\
&+ \quad d(x_{n-1}, y_m) \\ 
&+ \quad d(x_n, y_m) \\
C(X_{n-3}, Y_{m-1}) &+ \quad  d(x_{n}, y_{m-2})  \\
&+ \quad d(x_{n}, y_{m-1}) \\
&+ \quad d(x_n, y_m).\\
\end{cases}
 \label{recursion_pattern2}
 \end{equation}
 
where $X_{n}$ is the subsequence $\langle  x_1, \hdots, x_n \rangle$ and $Y_{m}$ is the subsequence $\langle  y_1, \hdots, y_m \rangle$. The overall dissimilarity is given by
Then, \[DTW(X,Y) = C(X_{24},Y_{24}). \]

  \begin{figure}[h]
        \centering
        \includegraphics[width=.5\linewidth]{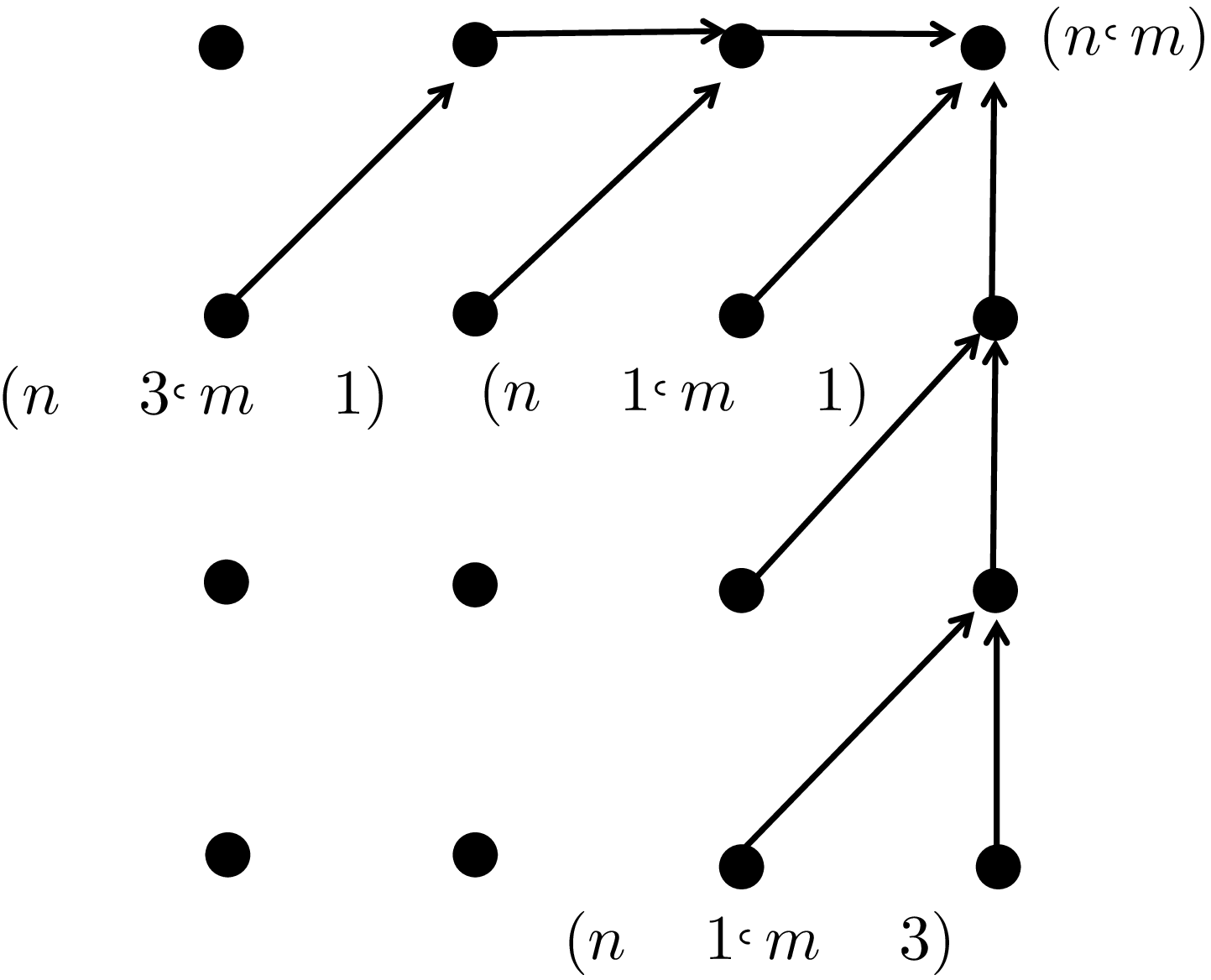}
        \caption{Step size condition in DTW}\label{fig:dtwpattern}
\end{figure}

Given a set of load curves $X$, and the number of clusters $K$, the goal of DTW based clustering algorithm is to find for each cluster $k$, an assignment $C_k$, and the cluster prototype $\mu_k$ that minimize Within Cluster Sum (WC), defined as 
\begin{equation}
  \mbox{WC} = \sum_{k=1}^{K}\sum_{X \in C_k} \mbox{DTW}(X, \mu_k).
  \label{WCeq}
  \end{equation}
 We start the DTW based clustering algorithm with a random initialization of the cluster centers. In the assignment step, we assign each load curve to the closest cluster, based on $\mbox{DTW}(X,Y)$. After finding cluster membership of all load curves, we update cluster prototype, $\mu^*_k$ which is the minimizer of the sum of $\mbox{DTW}(X, \mu_k)$ over all $X \in C_k$ :
\begin{equation}
\mu^*_k = \argmin_{\mu \in C_k}\sum_{X \in C_k} \mbox{DTW}(X, \mu)
\label{avgeq}
  \end{equation}
  
The algorithm is equivalent to the K-medoids clustering \cite{Hastie} using a DTW similarity metric. 

\subsection{Data Description}
The data for clustering consists of 23,254 load curves from 1057 households provided by Opower Corporation. Each load curve is a 24 hour electricity consumption record from one household. Each household (with unique customer id) has total 22 load curves recorded between July 19 2012 to August 9 2012. Each load curve is pre-smoothed using cubic spline and normalized to sum to one. As a baseline, we also cluster the data using K-means and E\&M.

\subsection{Cluster Quality}
As will be seen, DTW results in a smaller number of clusters and lower household variability. To evaluate the quality of clusters, we compute three performance matrices.
\begin{enumerate}
\item The value of the clustering objective function WC, defined in Equation \ref{WCeq}, which measures the compactness of the clusters.
\item The sum of DTW distance between cluster centers, $\mbox{WB} = \frac{1}{2}\sum_{i \neq j}\mbox{DTW}(\mu_i, \mu_j)$, which measures diversity of the clusters \cite{online_media}.
\item The ratio of within cluster sum to between cluster variation (WCBCR) \cite{WCBCR} based on $\mbox{DTW}(X,Y)$, defined by $\mbox{WCBCR} = \frac{WC}{WB}$.
\end{enumerate}

A good clustering has a low value of WC and WCBCR, and large distance between cluster centers (WB). Figure \ref{fig:qualitymeasure} displays WC, WB and WCBCR versus the number of cluster prototypes. For comparison, we also show the results from K-means and E\&M clustering. DTW based clustering achieves lower value of WC and WCBCR, and larger value of WB. 

Equivalently, DTW results in fewer clusters a given the same WC and WCBCR. 
The appropriate number of clusters could be estimated graphically by using the rule of the `knee' \cite{WCBCR} from WCBCR plot , which gives values between 8 to 10 for DTW clustering, 20 for for K-means and between 14 to 16 for E\&M.

\begin{figure}[t!]

  \begin{subfigure}[t]{0.5\textwidth}
        \centering
         \includegraphics[width=.6\linewidth]{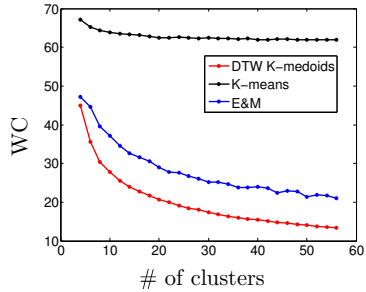}
                \caption{ WC from clustering using DTW is consistently lower than the result using K-means and E\&M.}\label{fig:WC}
    \end{subfigure}
   ~
   \begin{subfigure}[t]{0.5\textwidth}
        \centering
         \includegraphics[width=.6\linewidth]{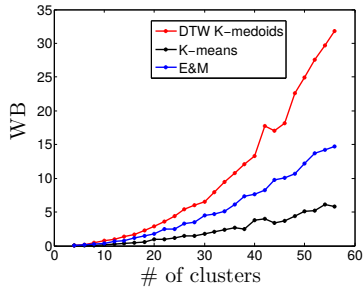}
                \caption{ WB from clustering using DTW is consistently higher than the result using K-means and E\&M.}\label{fig:WB}
    \end{subfigure}
     ~
   \begin{subfigure}[t]{0.5\textwidth}
        \centering
         \includegraphics[width=.6\linewidth]{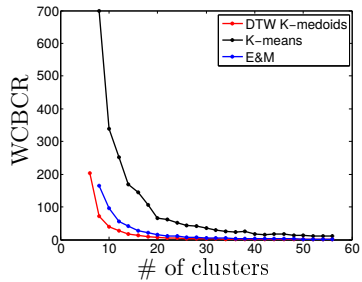}
                \caption{ WCBCR from clustering using DTW is consistently lower than the result using K-means and E\&M.}\label{fig:WCBCR}
    \end{subfigure}%
   \caption{Comparison of adequacy measure of clustering metric} \label{fig:qualitymeasure}
\end{figure}

\subsection{Consumer Variability}
To capture variability in clusters, we utilize a notion of entropy from information theory \cite{ram_euclidean}. 
When $M$ load curves from household $n$ are clustered into $K$ clusters, the entropy is defined as 
\begin{equation}
S_n = - \sum_{k=1}^{K}p_k \mbox{log}_M(p_k)\\
\label{entropy}
 \end{equation}
 where $p_k$ is the ratio between the number of load curve that falls into cluster $k$ and the total number of load curves. Smaller $S_n$ corresponds to greater consistency.

After clustering the load curves into 10 clusters, average entropy from DTW clustering is 0.5, while that from K-means is 0.7 and that from E\&M is 0.64 (Figure \ref{fig:entropy}). The lower average household variability from DTW suggests that energy consumption of each household is more suitably described by an elastic model such as DTW. In Figure \ref{fig:segregate}, weekday load curves for a single individual are considered. Clustering using DTW results in a single cluster, while using K-means with $L_2$ metric results in three clusters. 
   \begin{figure}[h]
\begin{center}
  \includegraphics[width=.75\linewidth]{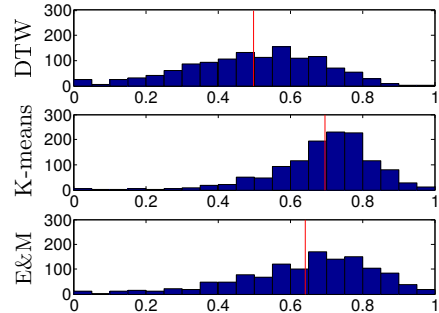}
  \end{center}
  \caption{Entropy distribution after clustering }\label{fig:entropy}
  \end{figure}

  \begin{figure}[h]
\begin{center}
  \includegraphics[width=.75\linewidth]{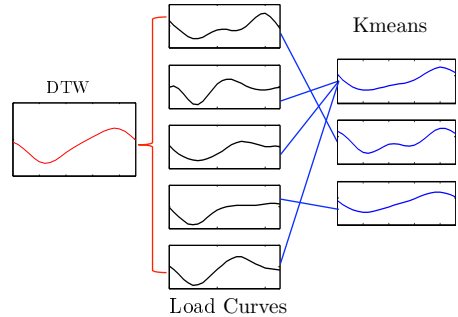}
  \end{center}
  \caption{Classification of load curves for a single individual. While clustering using DTW results in a single cluster, using K-means results in three clusters.}\label{fig:segregate}
  \end{figure}

\section {Load Curve Prediction} 
In this section we make day ahead load curve predictions for an individual household. We use DTW based cluster prototypes and and Markov techniques based on load shapes to make this prediction. The prediction has two steps. First, we select the best next day load shape from the cluster prototypes, conditioned on the current day's load curve. Then, we scale the selected prototype to obtain the prediction.

\subsection{Day Ahead Load Shape Selection}
Conceptually we construct Markov models based on the shapes of load curves, or more usefully, parts of load curves. Given $K$ clusters and $D$ days history of load curves from an individual consumer whose next day consumption to be predicted, let first consider the simplest setting. We encode each past load curve with the index of the nearest cluster under DTW distance. If $a_d$ represents the encoded cluster of day $d$ load curve, we model a sequence $\{a_d: d = 1, \hdots, D \}$ as a stationary Markov process. We use empirical data to calculate the probability transition, $P(a_{d+1}\vert a_{d})$ which is the conditional probability of the encoded cluster of the next day load curve, given that of today load curve. We then use this probability transition matrix to predict the cluster of the next day load curve. The prototype of the selected cluster is then 24 hour long predicted load shape.

More generally, we can apply this concept to portions of 24-hour load curves. We divide each 24 hour day into $n_p$ equal length periods, each period has $\frac{24}{n_p}$ hours long record. We cluster each period\textquotesingle s load curves separately, resulting in $n_p$ sets of K clusters. For load curves of an individual consumer whose next day consumption to be predicted, denote period $p$ of day $d$ load curve by $X_{d,p} \in R^{\frac{24}{n_p}}$. If $\Lambda$ is a set of all possible $X_{d,p}$, and $\Sigma$ is a set of all possible cluster indices $\left(\set{1,\hdots,K}\right)$, the encoder function $\mbox{dp-ENC} : \Lambda \rightarrow \Sigma$ (Algorithm \ref{alg:DTW_assignment}) returns the index of period $p$ cluster whose DTW distance from its prototype to the input is minimum. If $a_{d,p} = \mbox{dp-ENC}(X_{d,p})$, we use empirical data to calculate the probability transition, $P_{p}\left(a_{d,p} \vert \{a_{d,l}\}_{l=1}^{p-1}, \{a_{d-1,l}\}_{l=p}^{n_p} \right)$ which is the conditional probability of the encoded cluster of period $p$, given that of period $p$ on the previous day to period $p-1$ on the same day (Algorithm \ref{alg:markov_pcal}) (Note, the list of conditioning variables can be adjusted as appropriate). We then use this to iteratively predict the cluster and corresponding prototype of the next day load curve (Algorithm \ref{alg:DTW_predictshape}), starting from period 1 to period $n_p$. Figure \ref{fig:DTWmarkovprocess} illustrates this procedure.
Furthermore, to reflect the calendar effect, two Markov based models should be constructed, one for weekdays prediction and the other for weekends prediction.

For example, as illustrated in figure 6, if $n_p = 2$, we would have two sets of K clusters : AM cluster and PM cluster. If $a_{d,AM}$ and $a_{d,PM}$ represents the encoded cluster of $X_{d, AM}$ and $X_{d,PM}$ respectively, we calculate $P_{AM}(a_{d,AM}\vert a_{d-1,AM}, a_{d-1,PM})$ which is the conditional probability of the encoded cluster of the AM load, given that of the previous day AM and PM load. We proceed in a similar fashion to calculate $P_{PM}(a_{d,PM}\vert a_{d,AM}, a_{d-1,PM})$. Table \ref{table:probcluster} and \ref{table:resultcluster} illustrate the calculation.

\floatname{Procedure}{Function}
 \begin{algorithm}
  \caption{DTW cluster encoder for period $p$ day $d$ load curve}
 \label{alg:DTW_assignment}
 \begin{algorithmic}[1]
 \Function{dp-enc}{$ X_{d,p}, \{\mu_{k,p}\}_{k=1}^K, d, p$}
   \Comment{Assign $p$-cluster to period $p$, day $d$ load curve }
    \State{$\tilde{X}_{d,p} \gets \frac{X_{d,p}}{\sum_{t=1}^{24}X_{d,p}\left[ i\right]}$}
     \State{$a_{d,p} \gets \mbox{argmin}_k \mbox{DTW}(\tilde{X}_{d,p}, \mu_{k,p})$}
 \State \textbf{return} $a_{d,p}$
 \EndFunction
\end{algorithmic}
\end{algorithm}

 \begin{algorithm}
  \caption{Calculation of probability transition to period $p$ cluster, conditioned on previous periods}
 \label{alg:markov_pcal}
 \begin{algorithmic}[1]
 \Function{p-transition}{ $ \set{ \{a_{d,l}\}_{l=1}^{n_p} }_{d=1}^D, p$ }
 \State{$G = \{1, \hdots, K\}^{n_p + 1}$}\\ \Comment{$G$ is a set of $(n_p + 1)$-plets whose each element takes values from $1$ to $K$}
 \For {$d \mbox{ from } 1 \mbox{ to } D$}
     \State{$v_d \gets \langle \{a_{d,l}\}_{l=1}^{p-1}, \{a_{d-1,l}\}_{l=p}^{n_p} \rangle$}
 \EndFor
  \For {$g \in G$} 
    \If {$\sum_{d=1}^D \mathds{1}\left(v_d = g \right) \neq 0$}
      \For {$i \mbox{ from } 1 \mbox{ to } K$} 
    \State{$ P_{p}(i \vert g) \gets \frac{\sum_{d=1}^D \mathds{1}\left(a_{d,p}=i \right) \mathds{1}\left(v_d = g \right) }{\sum_{d=1}^D \mathds{1}\left(v_d = g \right)}$}
    \EndFor
  \EndIf
  \EndFor
 \State \textbf{return} $P_{p}$
 \EndFunction
\end{algorithmic}
\end{algorithm}
 
  \begin{algorithm}
  \caption{DTW-Markov shape-based prediction }
 \label{alg:DTW_predictshape}
 \begin{algorithmic}[1]
 \Procedure{\mbox{ShapePredict}}{$\set {\{X_{d,p}\}_{d=1}^{D}, \{\mu_{k,p}\}_{k=1}^K\}}_{p=1}^{n_p}$}
    \Comment{Encode period p day d load curve with DTW cluster index}
   \For {$d \mbox{ from } 1 \mbox{ to } D$} 
  \For {$p \mbox{ from } 1 \mbox{ to } n_p$}  
     \State{$ a_{d,p} \gets \Call{\mbox{dp-enc}}{X_{d,p}, \{\mu_{k,p}\}_{k=1}^K, d, p}$}
  \EndFor
  \EndFor \\
   \Comment{Calculate probability transition matrices and select the prototype as a shape predictor for the each period }
           \For {$p \mbox{ from } 1 \mbox{ to } n_p-1$}  
     \State{$ P_{p} \gets \mbox{p-transition}\left( \set{ \{a_{d,l}\}_{l=1}^{n_p}}_{d=1}^D, p\right)$}
     \State{$\hat{a}_{D+1,p} \gets \mbox{argmax}_i P_{p}\left(i \vert  \langle \{\hat{a}_{D+1,l}\}_{l=1}^{p-1}, \{a_{D,l}\}_{l=p}^{n_p}\rangle  \right)$}
      \State{$\hat{S}_{D+1,p}\gets \mu_{\hat{a}_{D+1,p},p}$}
        \EndFor \\
 \EndProcedure
\end{algorithmic}
\end{algorithm}

 \begin{figure}[h]
        \centering
        \includegraphics[width=0.98\linewidth]{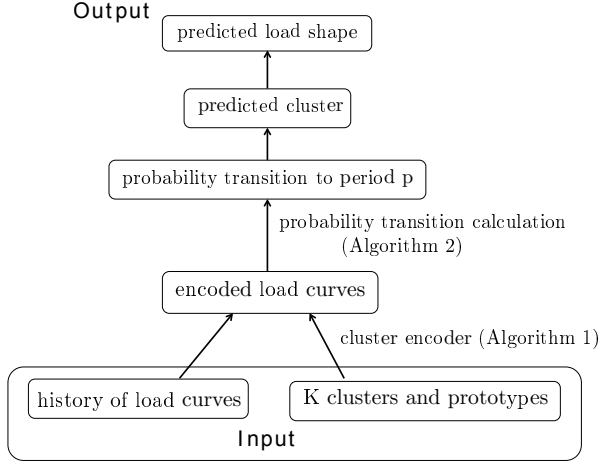}
        \caption{Shape-based prediction procedure (Algorithm \ref{alg:DTW_predictshape})}\label{fig:DTWmarkovprocess}
\end{figure}

\begin{table}[h]
{
 \begin{tabular}{ | c| c | c | c |c| }
    \hline
    \multicolumn{3}{ |c| }{Encoded cluster} & & \\ 
    \cline{1-3}
    $a_{d-1,AM}$ & $a_{d-1,PM}$ &$a_{d,AM}$ &Count& $P_{AM}(\cdot \vert \cdot)$   \\ \hline
    2 & 1 &  2&9 & 0.75 \\ \hline
     2 & 1 & 3& 3 & 0.25 \\ \hline
      3 & 3 & 3& 3 & 1 \\ \hline
    3 & 1& 2 &2 & 0.33 \\ \hline
    3 & 1 & 3& 4 & 0.67 \\ \hline
  \end{tabular}
  
 \quad 
 \\
 \\
  \begin{tabular}{ | c| c | c |c| c| }
    \hline
    \multicolumn{3}{ |c| }{Encoded cluster} & & \\ 
    \cline{1-3}
   $a_{d-1,PM}$ & $a_{d,AM}$ &$a_{d,PM}$ &Count& $P_{PM}(\cdot \vert \cdot)$\\ \hline
  1 & 2 &  1&11 & 1 \\ \hline
     1 & 3 & 1& 4 & 0.57 \\ \hline
      1 & 3 & 3& 3 & 0.43 \\ \hline
    3 & 3& 1 &3 & 1 \\ \hline
  \end{tabular}
  
}
 \caption{Example of calculating conditional probability of encoded cluster for each period load curve. }
 \label{table:probcluster}
  \end{table}

\begin{table}[h]
\centering
      \begin{tabular}{ |c| c| c |  c| }
    \hline
    \multicolumn{2}{ |c|}{Previous day encoded cluster} & \multicolumn{2}{ |c| }{Predicted next day cluster}  \\ 
    \hline
    $a_{D,AM}$&$a_{D,PM}$& $\hat{a}_{D+1,AM} $&$ \hat{a}_{D+1,PM} $ \\ \hline
    2 & 1& 2 & 1 \\ \hline
\end{tabular}
    \caption{Example of next day AM and PM cluster prediction using conditional probability from Table \ref{table:probcluster}   }
   \label{table:resultcluster}
\end{table}

\begin{figure}[t!]

  \begin{subfigure}[t]{0.5\textwidth}
        \centering
         \includegraphics[width=.6\linewidth]{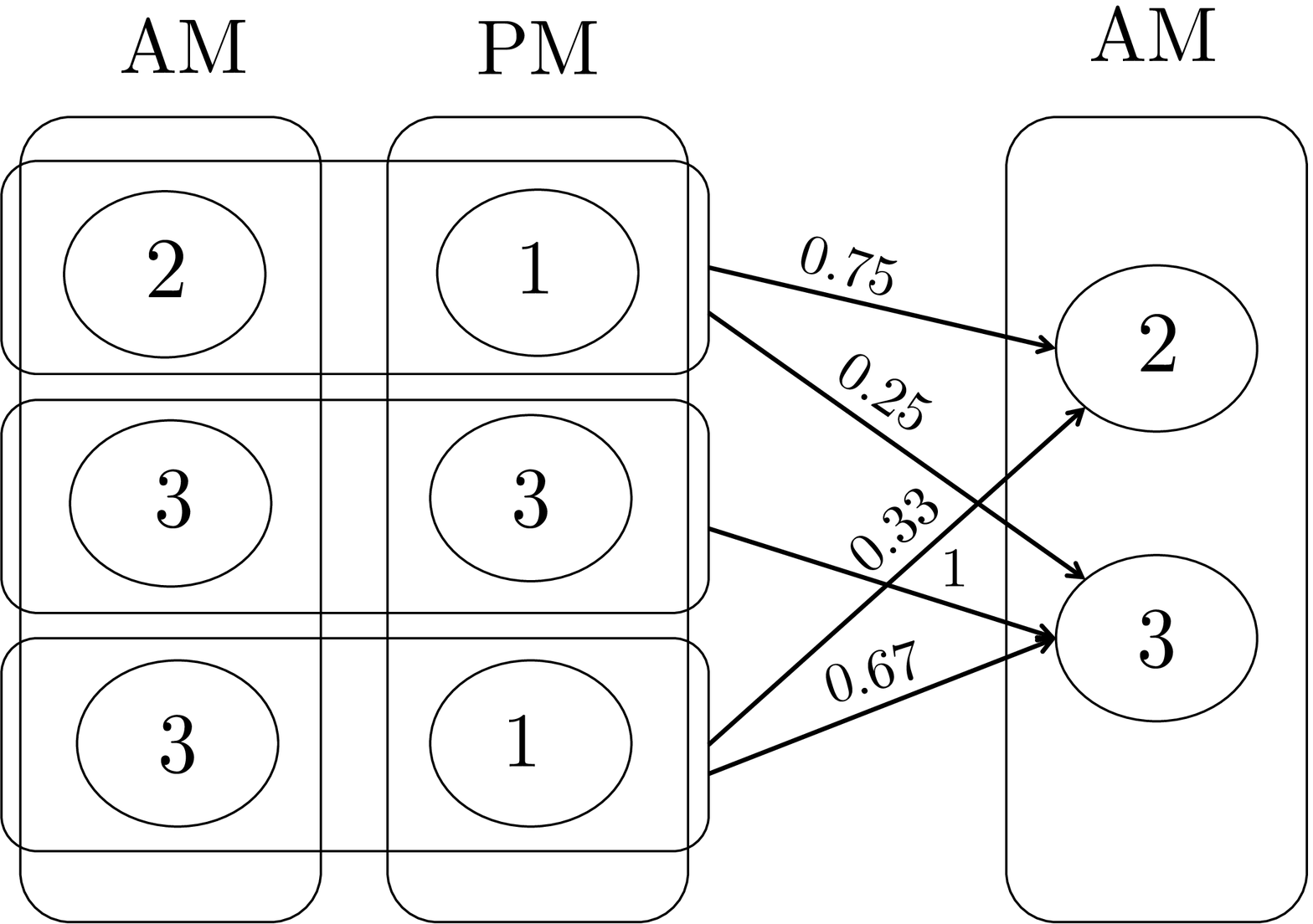}
                \caption{ Probability transition $P_{AM}(a_{d,AM}\vert a_{d-1,AM}, a_{d-1,PM})$ calculated from Table \ref{table:probcluster} }\label{fig:markovchain_AM}
    \end{subfigure}
   ~
   \begin{subfigure}[t]{0.5\textwidth}
        \centering
         \includegraphics[width=.6\linewidth]{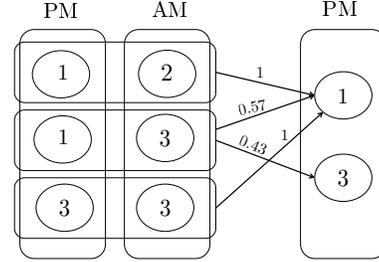}
                \caption{ Probability transition $P_{PM}(a_{d,PM}\vert a_{d-1,PM}, a_{d,AM})$ calculated from Table \ref{table:probcluster} }\label{fig:markovchain_PM}
    \end{subfigure}%
        \caption{Markov chain calculated from Table \ref{table:probcluster} } \label{fig:markovchain}
\end{figure}

\subsection{Adjustment on Predicted Load Shape}
We obtain from  load shape prediction procedure (Algorithm \ref{alg:DTW_predictshape}) the prototypes for each portion (period) of the next day load curve. However, these prototypes have been normalized from the clustering process. Therefore, the final forecast of each period is obtained by an extra scaling step,
\begin{align}
\hat{X}_{D+1,p} &= \alpha_{D+1,p}\hat{S}_{D+1,p} \label{scaling}.
\end{align}
The scaling coefficient $\alpha_{D+1,p}$ is computed to minimize the weighted sum of squared errors of the past forecast,
\begin{align}
\alpha_{D+1,p} &= \mbox{argmin}_{\alpha}\sum_{d=D+1-M}^D \| \alpha \hat{S}_{d,p} - X_{d,p} \|_2^2  \\
                         &= \frac {\sum_{d=D+1-M}^D \langle \hat{S}_{d,p}, X_{d,p} \rangle}{\sum_{d=D+1-M}^D \| \hat{S}_{d,p}\|_2^2},
\end{align}
where $X_{d,p}$ is an actual period $p$ day $d$ load curve, $\hat{S}_{d,p}$ is a corresponding predicted prototype, $M$ is the number of previous days for which load curve predictions have been made, and $\beta $ is a weighting factor in the range $0 < \beta_i < 1$ whose effect is to de-emphasize the old data.

However, if the past load forecast is not available (for example, when the number of available past load information is small and hence the recursive prediction would not be accurate), we could obtain a naive estimate,
\begin{align}
\hat{X}_{D+1,p} &= \mbox{argmin}_{\alpha}\sum_{d=1}^D \| \alpha \langle \hat{S}_{D+1,p} ,\mathbf{1}\rangle \mathbf{1} - X_{d,p}  \|_2^2 \\
    &= \frac {\sum_{d=1}^D \langle X_{d,p}, \mathbf{1} \rangle}{D \langle \hat{S}_{D+1,p}, \mathbf{1} \rangle}
\end{align}
where $\mathbf{1}$ is an all-ones vector. Using this estimate, the sum of the final predicted load curve is the average of past period $p$ load curves sum. 


\subsection{Load Curve Prediction Experiment}
In this experiment we predict next day load curves using empirical data provided by Opower.  Based on clusters from section \ref{clustering}, we encode the load curves and construct two Markov based models: one for weekdays and one for weekends. We divide households into two groups: validation set for prediction model parameters selection and the test set for prediction. The validation set is 3300 load curves recorded between July 19 2012 to August 9 2012. from 150 households in low variability group. Each household has 22 consecutive days of data. We use "leave one out" cross validation by using 21 days to construct Markov based models, and make the prediction for the other day. The test set is 1100 load curves from another 50 households, also in low variability group. 

\subsection{Prediction Model Parameters Selection}
We need to specify number of training set clusters ($K$) and number of periods ($n_p$) for our DTW-Markov prediction model. We select these parameters using trial-and-error method, varying $K$ and $n_p$, and selecting the pair that results in the lowest normalized DTW error (DTWE) on validation data, where
\[ \mbox{DTWE}(\hat{x}, x) = \sqrt{\frac{\mbox{DTW}(\hat{x}, x)}{\| x\|_2^2}}.\]
The reason for using DTWE as opposed to widely used error measure such as mean absolute percentage error (MAPE) is to evaluate the predicted sequence based on how well it describes a consumer underlying electricity usage pattern. 

We vary $K$ from 2 to 50, incrementing by 2 and vary $n_p$ from 1 to 3. There are total 75 models. Table \ref{table:DTWE} enumerates the models based on different $K-n_p$ pairs where each model applied to all households, we obtain the table corresponding to the mean DTWE values on the validation data. As $K$ increases, DTWE improves up to critical number of cluster. After this value, the behavior gradually stabilizes. The number of clusters where DTWE begins to stabilize is around 10 for $n_p = 1$ and around 12 for $n_p=2$. We choose the model with $n_p = 2$ and $K=12$ to balance the tradeoff between DTWE and number of clusters, which results in DTWE of 0.172 on validation data.  

%

\begin{table}[h]
\centering
      \begin{tabular}{ |c| c| c |  c| }
    \hline
      $K$&$n_p = 1$& $n_p = 2$&$ n_p = 3$ \\ \hline
2& 0.242 &0.222 &0.210  \\ \hline
4& 0.222& 0.198& 0.206 \\ \hline
6& 0.212& 0.181&0.202 \\ \hline
8& 0.208& 0.178& 0.201 \\ \hline
10& 0.202& 0.176& 0.200 \\ \hline
12& 0.200& 0.172& 0.200 \\ \hline
14& 0.200& 0.172& 0.200 \\ \hline
16& 0.201& 0.169& 0.197 \\ \hline
18& 0.200& 0.170& 0.199 \\ \hline
20& 0.198& 0.169& 0.198 \\ \hline
22& 0.199& 0.170&0.201 \\ \hline
24& 0.197& 0.169& 0.202 \\ \hline
26& 0.197& 0.169& 0.202 \\ \hline
\end{tabular}
    \caption{Mean DTWE value from the prediction on the validation data (leave one out cross validation) using DTW-Markov based model with varying $K-n_p$ pairs. We only shows the result with K up to 26 to avoid redundancy.   }
   \label{table:table:DTWE}
\end{table}

\subsection{Prediction Results}
Using the DTW-Markov based model ($n_p = 2$, $K=12$) to make a prediction on test data (Electricity consumption on August $10^{th}$ 2012 for each of 50 Opower customer households), results in mean DTWE of 0.182. Table \ref{table:DTWE_compare} compares our prediction with the some existing techniques in the literature.

\begin{table}[h]
\centering
   \begin{tabular}{|K{2cm} |c| K{4.5cm}|}
    \hline
    Prediction Model&DTWE&Description \\ \hline
     DTW-Markov &0.182&  DTW-markov model with $n_p = 2$, $K=12$ \\ \hline
    Tao\textquotesingle s vanilla with recency effect (A)  & 0.214& Multiple linear regression model with added recency effect \cite{Wang2016} (The original version of this model \cite{Hong_2010} is used to produce benchmark scores for GEFCom2012) \\ \hline
Semi-parametric additive model (B)   & 0.249& ST model (short term forecast) proposed by \cite{semi_2014} \\ \hline
    Support Vector machine (C) & 0.203& Support vector regression (SVR) (This was the winning entry in 2001 EUNITE competition in daily maximum load forecast\cite{svm}. For comparison purpose, we apply SVR to forecast hourly load.) \\ \hline
    Artificial Neural Network (D) & 0.264&  ANNSTLF \cite{ANNSTLF}, an ANN based model for short term load forecast \\ \hline
   
\end{tabular}
    \caption{Mean DTWE value from the prediction on the test data from DTW-Markov model ($n_p = 2$, $K=12$), compared to the selected models from the lieterature }
   \label{table:DTWE_compare}
\end{table}

DTW-Markov model reduces the DTWE by 15\%, 27\%, 10\% and 31\% from model A, B, C and D respectively . Figure \ref{fig:dtw_markov_result} shows predicted load curve from DTW-Markov model, compared to those from model A-D.
\begin{figure}[t!]
  \begin{subfigure}[t]{0.5\textwidth}
        \centering
         \includegraphics[width=0.5\linewidth]{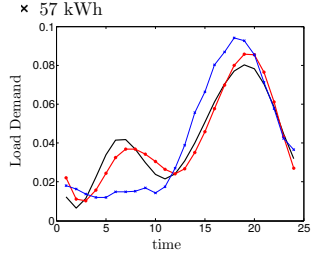}
                \caption{ True and predicted load curve from DTW-Markov and A}\label{fig:predictresult1}
    \end{subfigure}
  ~

   \begin{subfigure}[t]{0.5\textwidth}
        \centering
         \includegraphics[width=0.5\linewidth]{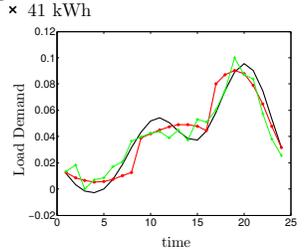}
                \caption{ True and predicted load curve from DTW-Markov and B}\label{fig:predictresult3}
                \end{subfigure}
        ~
           \begin{subfigure}[t]{0.5\textwidth}
        \centering
         \includegraphics[width=0.5\linewidth]{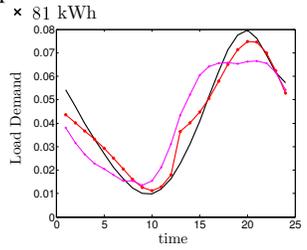}
            \caption{ True and predicted load curve from DTW-Markov and C}\label{fig:predictresult2}
    \end{subfigure} %
     ~
   \begin{subfigure}[t]{0.5\textwidth}
        \centering
         \includegraphics[width=0.5\linewidth]{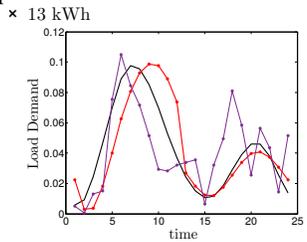}
               \caption{ True and predicted load curve from DTW-Markov and D}\label{fig:predictresult4}
    \end{subfigure}
    
      \caption{True and predicted load curves of households from prediction test data. Black : True load curve, Red : DTW-markov model, Blue : model A , Green : model B, Magenta : model C, Purple : model D} \label{fig:dtw_markov_result}
\end{figure}

\subsection{Complexity Analysis}
The complexity of DTW-Markov prediction algorithm occurs from clustering and Markov chain modeling. We consider the complexity in each task.
\begin{enumerate}
\item{K-medoids with DTW distance clustering}\\
K-medoids requires calculation of DTW distance between pairs of load curves in the training set. Given that training data consists of $N_{tr}$ load curves, the complexity of PAM (Partitioning Around Medoids) is $O(K(N_{tr}-K)^2)$ for each iteration. We need to vary $K$ for model selection. However, since the approximated number of cluster of full load curve from DTW  is around 10 (section \ref{clustering}), we limit $K$ to at most 50. As a result, the complexity of this step is simply $O(N_{tr}^2)$. Furthermore, we only need to do clustering once. The set of prototypes could be used to predict 24 hours ahead consumption of any consumer. 

\item{Markov Chain Modeling}\\
To predict an electricity consumption of a household on day $D+1$, given past $D$ days load curves, we need to model a Markov chain from \{$a_{d,p}$, $d = 1, \hdots, D$, $p = 1, \hdots n_p$\}. Cluster assignment step requires calculation of DTW distance between each past day load curve and the cluster prototypes, resulting in a complexity of $O(n_pKD)$. Probability transition calculation requires at most $O(K^{n_p+1}D)$ operations However, with the bounded value of $n_p$ and $K$, the complexity becomes $O(D)$.

\end{enumerate}

\section{Power Level Decomposition (PLD)}
In this section, we extend the prediction problem to estimating how much energy each device uses each hour during a 24 hour period. For example, estimating air-condition energy usage when only total energy usage from all devices in the house is observed. We consider two cases. One, where an entire 24 hour load curve is available to estimate device energy usage. Second, where only previous days load curves are available and an estimate is needed for the next day. 

\subsection{PLD estimation from full load curve information}
For each load curve, we model an Appliance Usage Matrix, $\tilde{A}$ where $\tilde{A}_{ij}$ indicates the amount of energy used at power level $j$ (corresponding to device $j$) divided by power level $j$, during hour $i$. 

We first define the following matrices involved in this section.
\begin{enumerate}
\item $\tilde{A}$ is the actual but unknown Appliance Usage Matrix.
\item $A$ is an estimate of $\tilde{A}$, obtained from solving \ref{minnormA}.
\item $\hat{A}$ is a predicted Appliance Usage Matrix, based on partially known load curve.
\end{enumerate}

Since $\tilde{A}$ is not directly observable, we first indicate how to find $A$ given complete load curve information. There are several approaches to estimate $A$ including $L_1$ norm minimization which promotes sparsity, total variation (TV) norm minimization, and other forms of convex optimization. However, we first introduce a minimum Frobenius norm estimate, which has useful properties for clustering and prediction problem.
\begin{equation}
 A = \argmin_{A \alpha p = x}{\|A\|_F},\\
\label{minnormA}
\end{equation}
whose closed form solution is 
\begin{equation}
A = \frac{1}{\alpha \|p\|^2}xp^T\\
\label{Asolution}
\end{equation}
where $x$ is complete 24 hour vector load curve, $p$ is a vector of power levels corresponding to the operation of different devices, and $\alpha$ is an arbitrary positive scaling for $p$ vector, reflecting different units of power. Note that $A_{ij}p_j$ is the energy usage of device $j$ during hour $i$.
Figure \ref{fig:A_1_norm1} and \ref{fig:A_4_norm1} shows the sparsity pattern of PLD matrices when the cutoff is its median. The rows of $A$ with larger elements corresponds to the hours with higher energy usage. 

\begin{figure}[t!]
    \centering
    \begin{subfigure}[t]{0.4\textwidth}
        \centering
        \includegraphics[width=.6\linewidth]{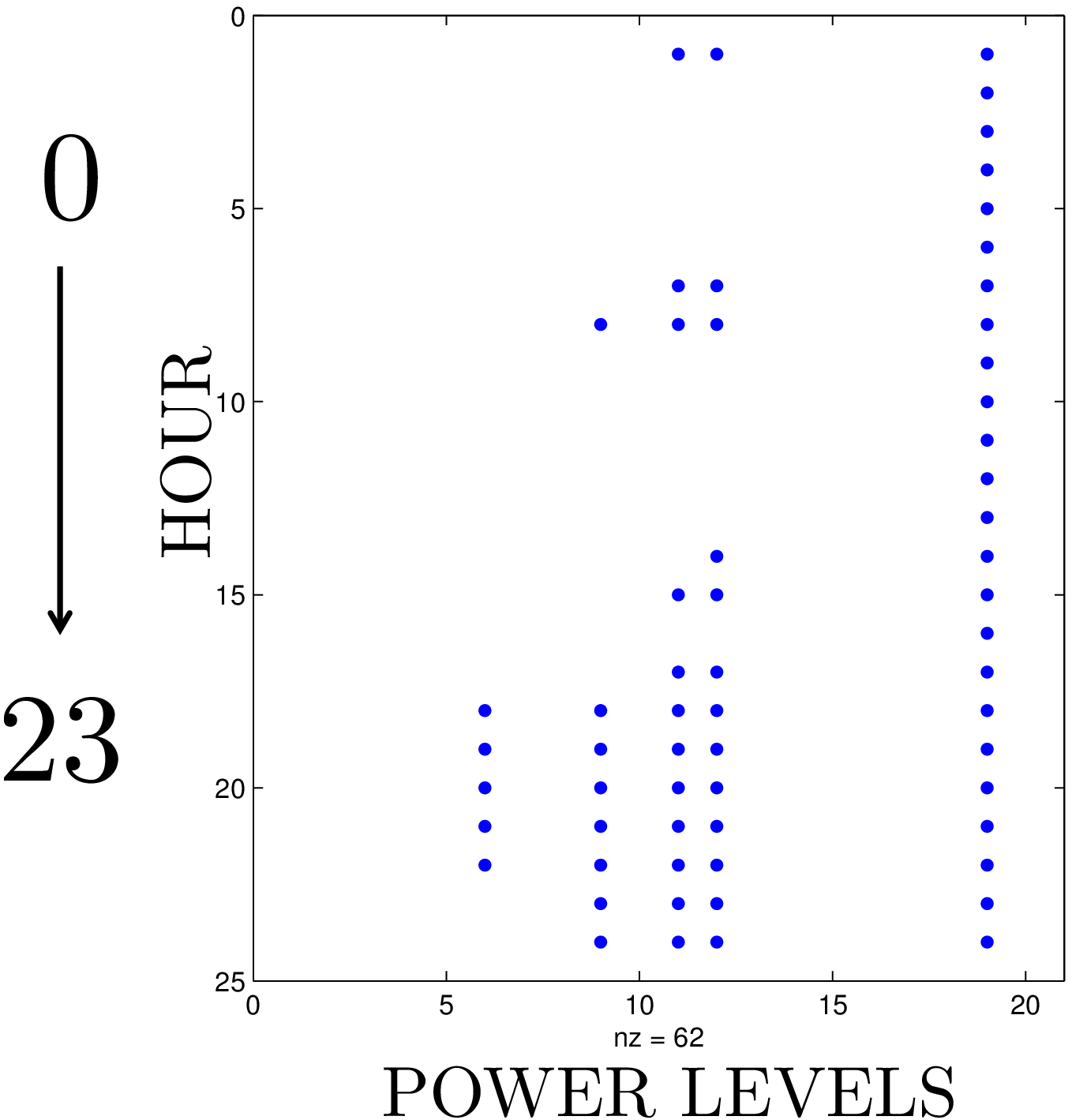}
        \caption{ Sparsity pattern of an Evening Peak)}\label{fig:A_1_norm1}

    \end{subfigure}
    ~ 
    \begin{subfigure}[t]{0.4\textwidth}
        \centering
         \includegraphics[width=.6\linewidth]{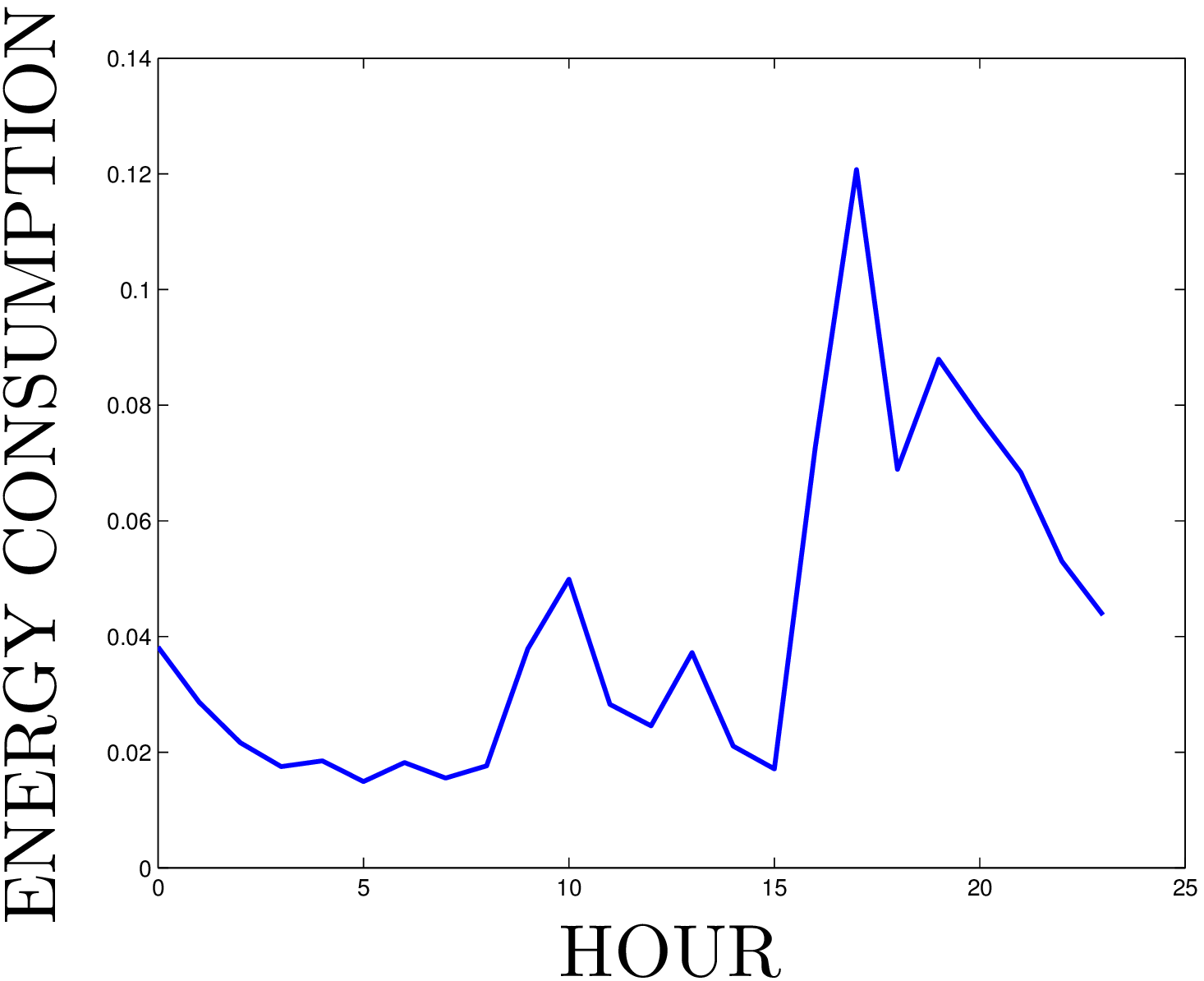}
                \caption{Load curve of an Evening Peak}\label{fig:X1}
    \end{subfigure}
    \caption{Sparsity pattern of the PLD matrix and the load curve of an Evening peak consumer}
\end{figure}

\begin{figure}[t!]
    \centering
    \begin{subfigure}[t]{0.4\textwidth}
        \centering
        \includegraphics[width=.6\linewidth]{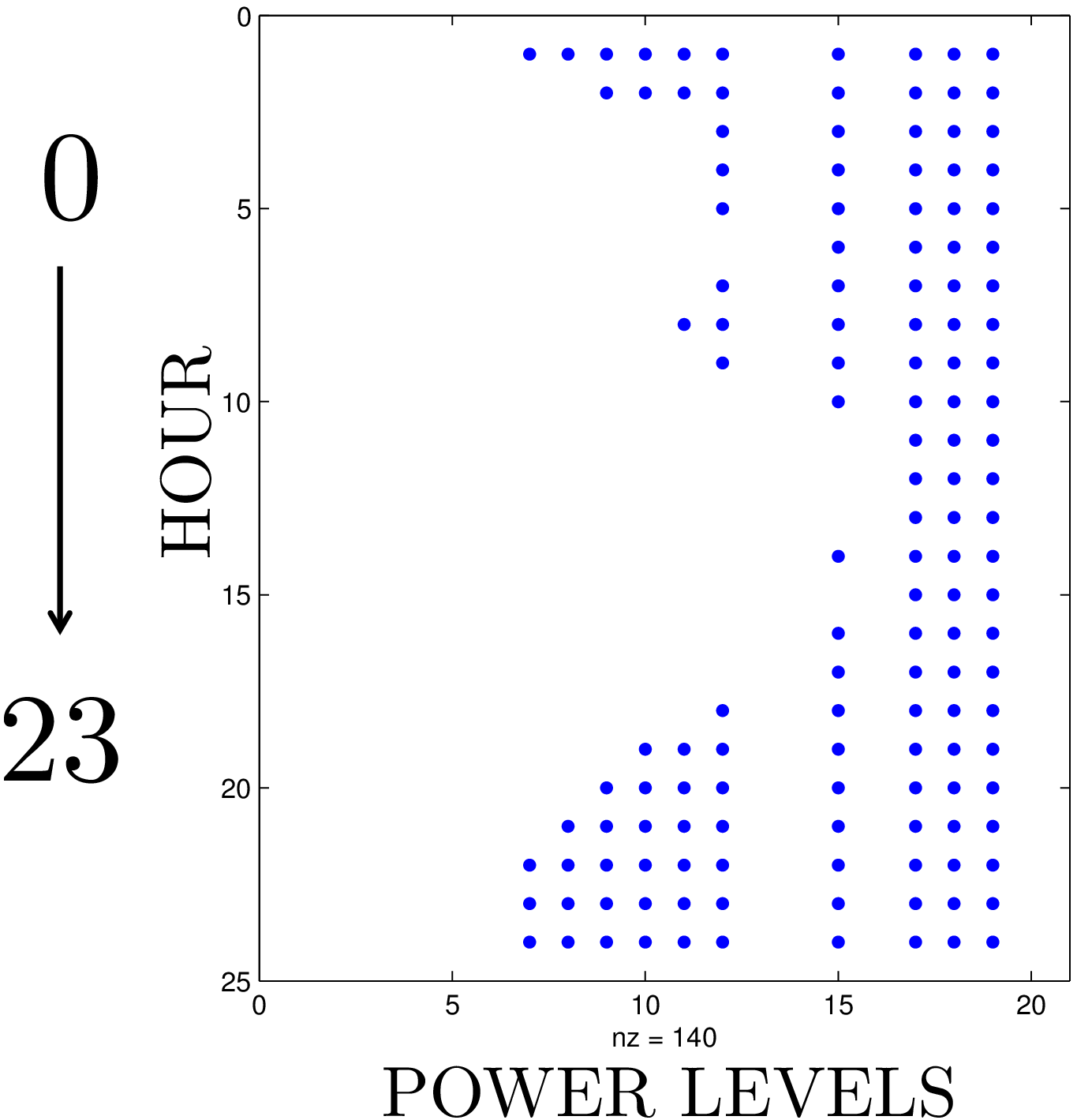}
        \caption{ Sparsity pattern of a Dual Morning \& Evening Peak)}\label{fig:A_4_norm1}

    \end{subfigure}
    ~ 
    \begin{subfigure}[t]{0.4\textwidth}
        \centering
         \includegraphics[width=.6\linewidth]{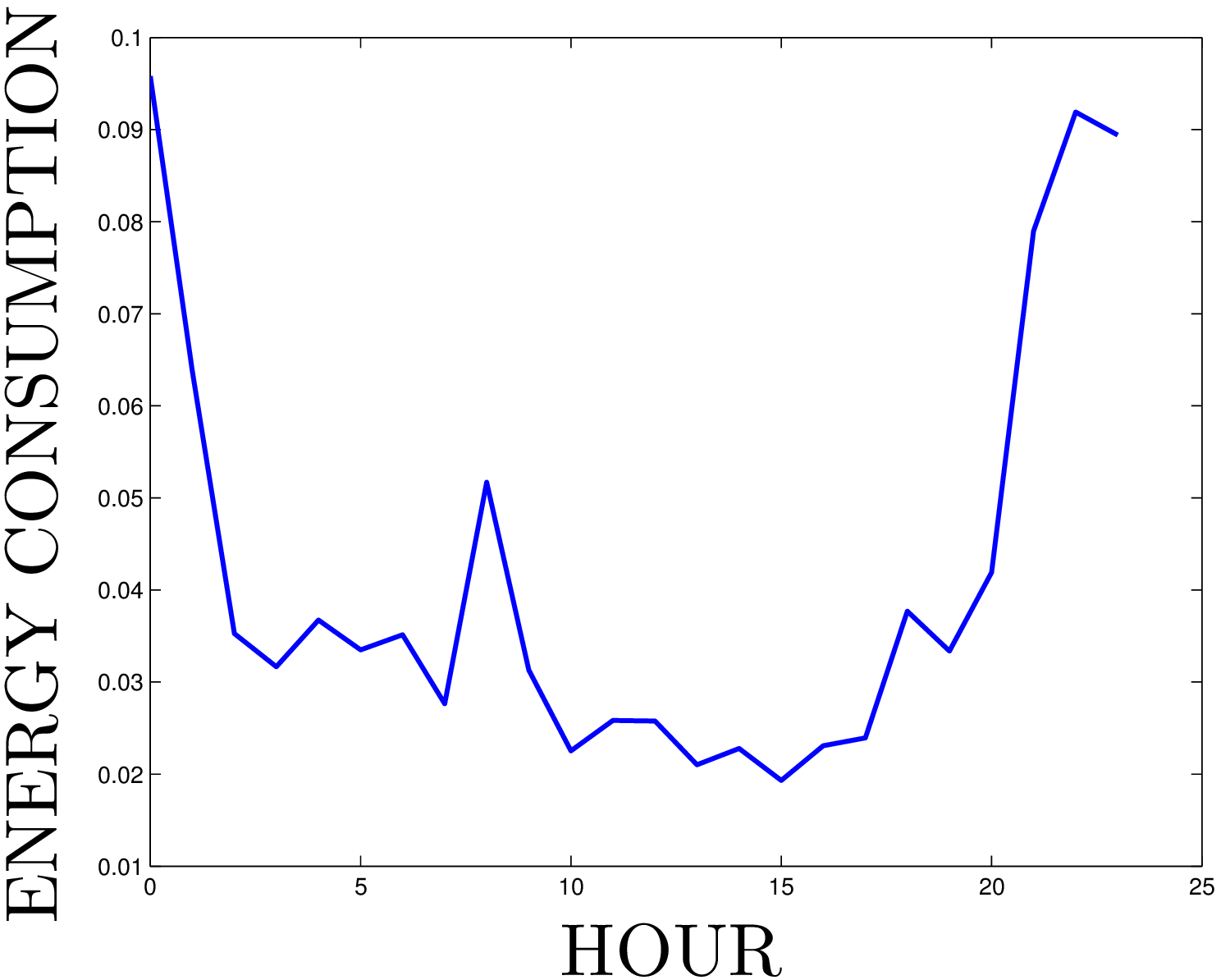}
                \caption{Load curve of a Dual Morning \& Evening Peak}\label{fig:X4}
    \end{subfigure}
    \caption{Sparsity pattern of the PLD matrix and the load curve of a Dual Morning \& Evening Peak consumer}
\end{figure}

\subsection{Clustering PLD matrices}
Let $A_x$ and $A_y$ be PLD matrices corresponding to load curves $x$ and $y$ obtained from the optimization \ref{minnormA}. Define DTW distance between two PLD matrices as $\mbox{DTW}(A_x, A_y) = \sum_{i} \mbox{dtw}(a_{xj}, a_{yj}), $ where $a_{xj}$ and $a_{yj}$ are $j^{th}$ column of $A_x$ and $A_y$ respectively. This could be interpret as a sum of DTW distance across time over all power levels. We have the following equivalences.
\begin{align}
\| A_x - A_y\|_F^2 &=\frac {\| x - y\|_2^2}{ \|\alpha p\|_2^2}\label{l2_equivalent}\\
\mbox{DTW}(A_x, A_y) & = \frac{\mbox{dtw}(x,y)}{ \|\alpha p\|_2^2} \label{DTW_equivalent}
\end{align}
As a result, clustering PLD matrices using K-means or K-medoids with DTW dissimilarity is equivalent to clustering the load curves and then converting each load curve in the clusters to PLD using equation \ref{Asolution}.

\subsection{PLD Prediction}\label{PLDpred}
In this section, we estimate 24 hour ahead device usage from previous days usage information. Given previous days load curve, the idea is to estimate $\tilde{A}$. 

Given previous days usage information, load curve prediction $\hat{x}$ is obtained from DTW-markov procedure (Algorithm \ref{alg:DTW_predictshape}, equation \ref{scaling}). Then, PLD prediction $\hat{A}$ is obtained from $\hat{x}$ using \ref{minnormA}. 
Using the Frobenius norm distance measure, PLD prediction error is
\begin{align*}
 \mbox{Prediction Error : MSE}(\hat{A}, \tilde{A}) & =  \frac{\|\hat{A}-\tilde{A}\|_F}{\| \tilde{A} \|_F}
\end{align*}

Let $\tilde{A} = A + H$, where $H$ has rank $R_H$ and $\sigma_{1}$ is a maximum singular value of $H$, we have the following bound on the PLD prediction error,
\begin{align}
\label{theorem1}
\begin{split}
\sqrt{\frac{\|\hat{x} - x\|_2^2  + \| \alpha p\|_2^2\sigma_{1}^2}{\| x\|_2^2 + \| \alpha p\|_2^2R_H\sigma_{1}^2}} &\leq \frac{\|\hat{A}-\tilde{A}\|_F}{\| \tilde{A} \|_F}  \\
&\leq \sqrt{\frac{\|\hat{x} - x\|_2^2  + \| \alpha p\|_2^2R_H\sigma_{1}^2}{\| x\|_2^2 + \| \alpha p\|_2^2\sigma_{1}^2}}.
\end{split}
\end{align}

We find the CDF of upper and lower bounds of PLD prediction error under Frobenius norm distance measure, which are random functions of $\sigma_1(H)$. We model $H$ as a gaussian random matrix \cite{johnstone2001}, whose entries are independent standard Gaussian variates. Equivalently, $\sigma_1^2$ is the largest eigenvalue of the real Wishart matrix $H^TH$. The distribution of $\sigma_1^2$  approaches the Tracey-Widom law of order 1, which can be approximated by a properly scaled and shifted gamma distribution \cite{Chiani201469}. 

For the upper bound $g_U(\sigma_1^2) = \sqrt{\frac{\|\hat{x} - x\|_2^2  + \| \alpha p\|_2^2R_H\sigma_{1}^2}{\| x\|_2^2 +  \| \alpha p\|_2^2\sigma_{1}^2}}$, the CDF

\begin{equation}
F_{g_U}(t) \\
 =
\begin{cases}
F_{\sigma_1^2} \left( \frac{t^2 \|x\|_2^2 - \|x - \hat{x}\|_2^2}{\| \alpha p\|_2^2(R_H - t^2)} \right) & \mbox{if} \quad t^2 \leq R_H \\
1 - F_{\sigma_1^2} \left( \frac{t^2 \|x\|_2^2 - \|x - \hat{x}\|_2^2}{\| \alpha p\|_2^2(R_H - t^2)} \right)= 1 &\mbox{otherwise}.\\
\end{cases}
 \label{PLDupper_prod}
 \end{equation}

For the lower bound $g_L(\sigma_1^2) = \sqrt{\frac{\|\hat{x} - x\|_2^2  + \| \alpha p\|_2^2\sigma_{1}^2}{\| x\|_2^2 + \|\alpha p\|_2^2R_H\sigma_{1}^2}}$, the CDF
\begin{equation}
F_{g_L}(t) \\
 =
\begin{cases}
F_{\sigma_1^2} \left( \frac{\|x - \hat{x}\|_2^2 - t^2 \|x\|_2^2}{\| \alpha p\|_2^2(t^2R_H - 1)} \right) = 0 & \mbox{if} \quad t^2 \leq \frac{1}{R_H} \\
1 -F_{\sigma_1^2} \left( \frac{\|x - \hat{x}\|_2^2 - t^2 \|x\|_2^2}{\| \alpha p\|_2^2(t^2R_H - 1)} \right) & \mbox{otherwise},\\
\end{cases}
 \label{PLDlower_prod}
 \end{equation}
 
where we assume that load curve prediction error under $L_2$ distance is bounded ($ \frac{1}{R_H} \leq \frac{\|x-\hat{x}\|_2^2}{\|x\|_2^2} \leq R_H$). This is a valid assumption since $R_H$ tends to be very large. For example, the number of electricity devices in the household (number of columns of $H$) is typically larger than 10, and the number of electricity consumption records in a day (number of rows of $H$ which is 24 for our paper) could be as large as 144, for smart meter data collected at 10 minutes interval. 

We are interested in how scaling factor $\alpha$ affects the CDF of these bounds. We plot the CDF of lower and upper bounds under different values of $\alpha$ and make the following observations: \\
\begin{enumerate}
\item With $\alpha$ large, $g_U \approx \sqrt{R_H}$, and $g_L \approx \sqrt{\frac{1}{R_H}}$. Therefore, $F_{g_U}(t) \approx 0$ when $t^2 \leq R_H$, and $F_{g_L}(t) \approx 1$ when $t^2 \geq \frac{1}{R_H}$ (Figure \ref{fig:Abound_largep}). \\
\item  With smaller $\alpha$, the gap between CDF of both bounds is smaller (Figure \ref{fig:Abound_medp}). \\
\item With $\alpha$ very small, $g_U \approx g_L \approx \frac{\| \hat{x} - x\|_2}{\|x\|_2}$. Therefore, the CDF of both bounds has a sharp cutoff at $t =\frac{\| \hat{x} - x\|_2}{\|x\|_2} $ (Figure \ref{fig:Abound_smallp}). 
\end{enumerate}

 \begin{figure}[!htbp]
    \begin{subfigure}[t]{0.45\textwidth}
        \centering
         \includegraphics[width=.6\linewidth]{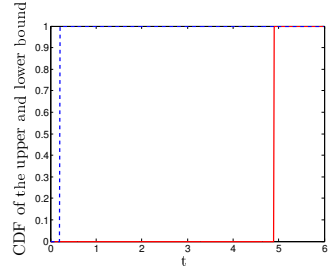}
                \caption{CDF of the upper and lower bound when $ \| \alpha p\|_2^2 = 9455$}\label{fig:Abound_largep}
    \end{subfigure}
   ~
    \begin{subfigure}[t]{0.45\textwidth}
        \centering
        \includegraphics[width=.6\linewidth]{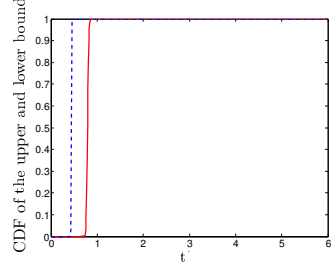}
        \caption{ CDF of the upper and lower bound when $ \| \alpha p\|_2^2 = 9.455 \times 10^{-6}$}\label{fig:Abound_medp}
    \end{subfigure}
    ~
    \begin{subfigure}[t]{0.45\textwidth}
        \centering
        \includegraphics[width=.6\linewidth]{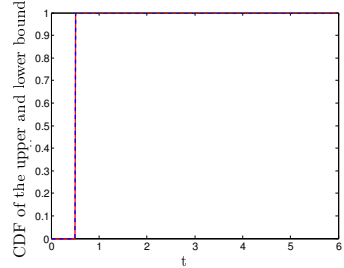}
        \caption{ CDF of the upper and lower bound when $ \| \alpha p\|_2^2 = 9.455 \times 10^{-8}$}\label{fig:Abound_smallp}
    \end{subfigure}%
    \caption{CDF of the upper and lower bound of PLD prediction error when $R_H = 24$, $\|x\|_2^2 = 0.0571 $, and $\| x - \hat{x}\|_2^2 = 0.0144 $ }
\end{figure}
\FloatBarrier

 The bound of the PLD prediction error is tight when the units of the power vector are appropriately chosen. As $\|\alpha p\|_2$ is smaller, both bounds approaches $\frac{\|x-\hat{x}\|_2^2}{\|x\|_2^2}$, which is load curve prediction error under $L_2$ distance.
 
\subsection{Sparse PLD Estimate}
To avoid unnecessary zero elements in the estimate of $\tilde{A}$, we could use $L_1$ minimization or a combination of both Frobenius norm and $L_1$ norm regularization. Define an entry-wise norm of $m \times n$ matrix as $\|A\|_1 = \left( \sum_{i=1}^m \sum_{j=1}^n \lvert a_{ij}\rvert\right)$, then the estimate could be obtained by solving

\begin{equation}
 A = \argmin_{A \alpha p = x}{\lambda_1\|A\|_F + \lambda_2\|A\|_1}. \\
\label{minnormA2}
\end{equation}

The larger $\lambda_2$ relative to $\lambda_1$, the estimate tends to be more sparse. However, when $\lambda_2 \neq 0$, $A$ does not have a closed form solution. As a result, to make a prediction for each device usage pattern (each column), we need to apply DTW K-medoids clustering and prediction procedure (Algorithm \ref{alg:DTW_predictshape}) to each column separately. Furthermore, the equivalence relations \eqref{l2_equivalent}, \eqref{DTW_equivalent} and the prediction error bound \eqref{theorem1} are no longer true for this case.

\section{Conclusion}
We present the shape-based approach to household level electric load curve clustering, household level load curve prediction, and time of use device estimation. Based on the observation that individuals express their patterns of energy consumption behavior at different times in different days, we present a new DR clustering metric, Dynamic Time Warping (DTW) for shape-based clustering. Using DTW, the shapes of two load curves are matched as well as possible by non-linear stretching and contracting of the time axes. As a result, clustering based on DTW metric results in a relatively smaller number of classes, higher clustering efficiency measures, and smaller household variability compared to traditional K-means and gaussian based E\&M algorithm.

Based on the prototypes from DTW based clustering, we encode the load curves with the nearest prototype under DTW distance, and construct Markov based models on the sequence of encoded load curves. The prediction is the best next day prototype, conditioned on the current day's encoded load curve. To measure the accuracy of our prediction method, we define the prediction error metric under DTW distance, DTWE. Our prediction method results in a lower prediction error on average, compared to selected forecasting techniques in the literature. 

Furthermore, we introduce Power Level Decomposition (PLD), where a load curve is decomposed into a matrix which provides an information on energy each device uses each hour during a 24 hour period. Given previous days load curves, we propose the method to predict the full PLD matrix. The probabilistic bound for this fine grain prediction error under Frobenius norm is tight when the power vector is appropriately chosen. In the future, we would like to explore and compare the performance of PLD estimates from different kinds of regularization or convex optimization. 

 \section{Appendix}
 \subsection{Prediction Model (A-D) Implementation }
 Here we provide the detailed implementation on load curve prediction models we use to compare with our DTW-Markov based model (Table \ref{table:DTWE_compare}).
 \begin{enumerate}
\item Model A : Tao\textquotesingle s vanilla benchmark model with recency effect \cite{Wang2016}\\
This is multiple linear regression model first proposed by \cite{Hong_2010}. It is used to produce benchmark scores for GEFCom2012. We use the extended version of this model \cite{Wang2016}, where the recency effect is added. We select the best average-lag pair ($d-h$ pair) that gives the best mean DTWE across 190 households in validation set.
\item Model B : Semi-parametric additive model \cite{semi_2014} \\
The semi-parametric additive model accommodates non-linear relationship between load and driver\textquotesingle s variables, We use the ST model (short term horizon) proposed by \cite{semi_2014} where non-linear functions of temperature, calendar variables and lagged demand are estimated using cubic spline regression. For each customer, we fit one model per instance of the day. The load is recorded every hour so that we fit 24 models corresponding to 24 hours per day.
\item Model C : Support Vector machine \cite{svm} \\
Support vector regression (SVR) applies risk minimization principle to minimize an upper bound of the generalization error\cite{Hong_2009}. This model was the winning entry in 2001 EUNITE competition in mid-term load forecasting whose goal is to predict daily maximum load (\cite{svm}). For comparison purpose, we apply SVR to forecast hourly load with the following input features, six binaries encoding Monday to Saturday respectively (Sunday is represented as all six attributes are set to zero), one numeric encoding temperature data, two numerics encoding past 24 hours load and past 48 hours load respectively. We select the model parameters by enumerating all 165 models based on different pairs of $\epsilon$ and cost, and choose the one that gives the best mean DTWE across households in validation set..
\item Model D : Artificial Neural Network \cite{ANNSTLF}\\
One of the implementation of ANN for short term load forecast is ANNSTLF \cite{ANNSTLF}, which includes base load ANN forecaster and load change forecaster. ANNSTLF based softwares were commercialized and are used by a large number of utilities across US and Canada \cite{hippert_ann}.
\end{enumerate}

 \subsection{Distance between PLD matrices}
The equivalence relation in equation \ref{DTW_equivalent} follows from 
\begin{align*}
\mbox{DTW}(A_x, A_y) & = \sum_{j} \mbox{dtw}(a_{xj}, a_{yj}) \\
                         & = \sum_{j} \mbox{dtw}(\frac{p_j x}{\alpha \|p\|_2^2}, \frac{p_j y}{\alpha \|p\|_2^2})\\
                         & = \sum_{j} \frac{p_j^2}{ \alpha^2 \|p\|_2^4} \mbox{dtw}(x, y)\\
                           & = \frac{\mbox{dtw}(x, y)}{\alpha^2 \|p\|_2^4}\sum_{j} p_j^2 \\
                          & = \frac{\mbox{dtw}(x, y)}{ \|\alpha p\|_2^2}.
\end{align*}

 \subsection{Proof for equation \eqref{theorem1}}
 \begin{lemma}
\label{lemma1}
\begin{align}
\mbox{tr}(A^TH) =  \mbox{tr}(\hat{A}^TH) = 0\label{claim1}
\end{align}
\end{lemma}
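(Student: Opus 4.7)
The plan is to exploit the rank-one closed-form structure of $A$ (and of $\hat{A}$) together with the constraint that both $A$ and $\tilde{A}$ reproduce the same load curve $x$ upon right-multiplication by $\alpha p$. Since the claim is algebraic, the whole argument should reduce to one application of the cyclic invariance of the trace.

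First, I would establish that $Hp = 0$. By definition $\tilde{A}$ is the true appliance usage matrix, so $\tilde{A}(\alpha p) = x$, and by construction $A(\alpha p) = x$ as well (the feasibility constraint in \eqref{minnormA}). Since $H = \tilde{A} - A$, subtracting gives $H(\alpha p) = 0$, hence $Hp = 0$. This is the only place where the physical meaning of $\tilde{A}$ and the constraint set of the optimization problem enter.

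Next, I would plug in the closed form \eqref{Asolution}, namely $A = \frac{1}{\alpha \|p\|^2} x p^T$, and compute
\begin{equation*}
\mathrm{tr}(A^T H) \;=\; \frac{1}{\alpha \|p\|^2}\,\mathrm{tr}\!\left(p x^T H\right) \;=\; \frac{1}{\alpha \|p\|^2}\, x^T (Hp) \;=\; 0,
\end{equation*}
using the cyclic property of the trace on the second equality and $Hp = 0$ on the third. The same computation applied with $\hat{A} = \frac{1}{\alpha \|p\|^2} \hat{x} p^T$ — which is the minimum-Frobenius-norm lift of the predicted load curve $\hat{x}$ via \eqref{Asolution}, Section~\ref{PLDpred} — yields $\mathrm{tr}(\hat{A}^T H) = 0$ by the identical trace manipulation.

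There is no real obstacle here; the only thing to be careful about is making sure the reader sees that both $A$ and $\hat{A}$ inherit the rank-one form $(\,\cdot\,)p^T$ from the common minimum-Frobenius-norm construction, so that the vector $p$ is available on the right to annihilate $H$ via $Hp=0$. If $\hat{A}$ were instead taken from the sparse variant \eqref{minnormA2}, this rank-one factorization would fail and the lemma would not hold — which is consistent with the remark at the end of the PLD section that the equivalences break down in that case.
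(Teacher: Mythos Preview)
Your argument is correct and matches the paper's proof essentially line for line: both derive $Hp=0$ from $A\alpha p=\tilde{A}\alpha p=x$, substitute the rank-one form \eqref{Asolution}, and then use cyclic invariance of the trace to reduce $\mathrm{tr}(A^TH)$ (resp.\ $\mathrm{tr}(\hat{A}^TH)$) to a multiple of $x^THp$ (resp.\ $\hat{x}^THp$). Your added remark on why the sparse estimate \eqref{minnormA2} breaks the lemma is also in line with the paper's closing caveat.
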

\begin{proof}
\begin{align*}
 \mbox{tr}(A^TH)         &\stackrel{(\grave{a})}= \frac{1}{\alpha \|p\|^2}\mbox{tr}(px^TH)
                                    = \frac{1}{\alpha \|p\|^2}\mbox{tr}(x^THp) 
                                    \stackrel{(\grave{b})} = 0 \\
 \mbox{tr}(\hat{A}^TH) &\stackrel{(\acute{a})}=  \frac{1}{\alpha \|p\|^2}\mbox{tr}(p\hat{x}^TH)
                                    = \frac{1}{\alpha \|p\|^2}\mbox{tr}(\hat{x}^THp) 
                                    \stackrel{(\acute{b})} = 0,
\end{align*}
where ($\grave{a}$) and ($\acute{a}$) follows from equation \ref{Asolution}, and ($\grave{b}$) and ($\acute{b}$) follows from $Hp = 0$ ($Ap = x$ and $\tilde{A}p = x$).
\end{proof}

 \begin{proof}
\begin{align*}
\|\hat{A}-\tilde{A}\|_F^2 &  = \| (\tilde{A} - A) - (\hat{A} - A)  \|_F^2 \\
                                     & = \|\hat{A} - A\|_F^2 + \|H\|_F^2 - 2 \mbox{tr}((\hat{A} - A)^TH) \\
                                     & \stackrel{(\acute{c})}{=} \frac{\|\hat{x} - x\|_2^2}{ \| \alpha p\|_2^2}  + \|H\|_F^2, \\
\| \tilde{A} \|_F^2  & = \| A + H\|_F^2 \\
                            &= \|A\|_F^2 + \|H\|_F^2 + 2\mbox{tr}(A^TH)\\
                            & \stackrel{(\acute{d})}{=}  \frac{\| x\|_2^2}{ \| \alpha p\|_2^2} + \|H\|_F^2
\end{align*}
where ($\acute{c}$) follows from equation \ref{l2_equivalent} and lemma \ref{lemma1}, and ($\acute{d}$) follows from lemma \ref{lemma1}. As a result, we have

\begin{align*}
\frac{\|\hat{x} - x\|_2^2}{ \| \alpha p\|_2^2}  + \sigma_{max}^2 &\leq \|\hat{A}-\tilde{A}\|_F^2  \leq  \frac{\|\hat{x} - x\|_2^2}{ \| \alpha p\|_2^2}  + R_H\sigma_{max}^2, \\
 \frac{\| x\|_2^2}{ \| \alpha p\|_2^2}  + \sigma_{max}^2 & \leq \| \tilde{A} \|_F^2  \leq   \frac{\| x\|_2^2}{ \| \alpha p\|_2^2}  + R_H\sigma_{max}^2,
\end{align*}

 and therefore establish the proof.
 \end{proof}

\bibliography{template2}

\end{document}